\newcommand{\keywords}[1]{\par\addvspace\baselineskip
\noindent\keywordname\enspace\ignorespaces#1}
\begin{document}

\mainmatter  

\title{Learning Neighborhoods for Metric Learning}
\titlerunning{Learning Neighborhoods for Metric Learning}

%
%
\author{Jun Wang %
\and Adam Woznica\and Alexandros Kalousis}
\authorrunning{Wang et al.}

\institute{AI Lab, Department of Computer Science, University of Geneva, Switzerland\\
Department of Business Informatics, University of Applied Sciences, Western Switzerland
\mailsa\\
\mailsb\\
}

%
%

\toctitle{Lecture Notes in Computer Science}
\tocauthor{Authors' Instructions}
\maketitle

\begin{abstract}
Metric learning methods have been shown to perform well on different learning tasks. Many of 
them rely on target neighborhood relationships that are computed in the original feature space and 
remain fixed throughout learning.  As a result, the learned metric reflects the original neighborhood relations. 
We propose a novel formulation of the metric learning problem in which, in addition to the metric, 
the target neighborhood relations are also learned in a two-step iterative approach. The new formulation
can be seen as a generalization of many existing metric learning methods.  The formulation includes 
a target neighbor assignment rule that assigns different numbers of neighbors to instances according 
to their quality; `high quality' instances get more neighbors.
We experiment with two of its instantiations that correspond to the metric learning algorithms LMNN and MCML and 
compare it to other metric learning methods on a number of datasets. The experimental results show state-of-the-art 
performance and provide evidence that learning the neighborhood relations does improve predictive performance.
\keywords{Metric Learning, Neighborhood Learning}
\end{abstract}

\section{Introduction}
The choice of the appropriate distance metric plays an important role in distance-based algorithms such as $k$-NN and $k$-Means clustering.
The Euclidean metric is often the metric of choice, however, it may easily decrease the performance of these algorithms 
since it relies on the simple assumption that all features are equally informative. Metric learning is an effective way to overcome 
this limitation by learning the importance of difference features exploiting prior knowledge that comes in different forms. 
The most well studied metric learning paradigm is that of learning the Mahalanobis metric with a steadily 
expanding literature over the last 
years~\cite{xing2003dml,schultz2004learning,globerson2006mlc,davis2007itm,nguyen2008metric,weinberger2009distance,lu2009geometry,guillaumin2009you,wang2011}.


Metric learning for classification relies on two interrelated concepts, similarity and dissimilarity constraints, and 
the target neighborhood. The latter defines for any given instance the instances that should be its neighbors and it 
is specified using similarity and dissimilarity constraints.
In the absence of any other prior knowledge the similarity and dissimilarity constraints are derived from 
the class labels; instances of the same class should be similar and instances of different classes should 
be dissimilar.

The target neighborhood can be constructed in a \textit{global} or \textit{local} manner. With a global target neighborhood all 
constraints over all instance pairs are active; {\em all} instances of the same class should be similar and {\em all} instances from different 
classes should be dissimilar~\cite{xing2003dml,globerson2006mlc}. These admittedly hard to achieve constraints can be relaxed with the incorporation of slack variables~\cite{schultz2004learning,davis2007itm,nguyen2008metric,lu2009geometry}.
With a local target neighborhood the satisfiability of the constraints is examined within a local 
neighborhood~\cite{goldberger2005nca,weinberger2006dml,nguyen2008metric,weinberger2009distance}.
For any given instance we only need to ensure that we satisfy the constraints that involve that instance and instances from its local neighborhood. 
The resulting problem is considerably less constrained than what we get with the global approach and easier to solve.  
However, the appropriate definition of the local target neighborhood becomes now a critical component of the metric learning algorithm since it 
determines which constraints will be considered in the learning process. \cite{weinberger2009distance} defines the local 
target neighborhood of an instance as its $k$, same-class, nearest neighbors, under the Euclidean 
metric in the original space. Goldberger et al.~\cite{goldberger2005nca} initialize the target neighborhood for each instance 
to all same-class instances. The local neighborhood is  encoded as a soft-max function of a linear projection matrix and changes 
as a result of the metric learning. 
%
%
%
%
%
%
With the exception of~\cite{goldberger2005nca}, all other approaches whether global or local establish a
target neighborhood prior to learning and keep it fixed throughout the learning process. Thus the metric 
that will be learned from these fixed neighborhood relations is constrained by them and will be a reflection 
of them. However, these relations are not necessarily optimal with respect to the learning problem that one is 
addressing. 

In this paper we propose a novel formulation of the metric learning problem that includes in the learning 
process the learning of the local target neighborhood relations.  The  formulation is based on the fact that 
many metric learning algorithms can be seen as directly maximizing the sum of some quality measure of the 
target neighbor relationships under an explicit parametrization of the target neighborhoods. We cast 
the process of learning the neighborhood as a linear programming problem with a totally unimodular constraint 
matrix~\cite{sierksma2002linear}. An integer 0-1 solution of the target neighbor relationship is guaranteed by the totally unimodular 
constraint matrix. The number of the target neighbors does not need to be fixed, the formulation allows the 
assignment of a different number of target neighbors for each learning instance according to the instance's quality.
We propose a two-step iterative optimization algorithm that learns the target neighborhood 
relationships and the distance metric.  The proposed neighborhood learning method can be coupled 
with standard metric learning methods to learn the distance metric, as long as these can be cast
as instances of our formulation.

We experiment with two instantiations of our approach where the  Large Margin Nearest Neighbor (LMNN)~\cite{weinberger2009distance}
 and Maximally Collapsing Metric Learning (MCML)~\cite{globerson2006mlc}
algorithms are used to learn the metric; we dub the respective instantiations LN-LMNN and LN-MCML. We 
performed a series of experiments on a number of classification problems in order to determine whether 
learning the neighborhood relations improves over only learning the distance metric. The experimental 
results show that this is indeed the case. In addition, we also compared our method with other state-of-the-art metric 
learning methods and show that it improves over the current state-of-the-art performance.

The paper is organized as follows.  In section~\ref{sec:relatedWork}, we discuss in more detail the related work. In Section~\ref{sec:LNML} we present the optimization problem of the 
Learning Neighborhoods for Metric Learning algorithm (LNML) and in Section~\ref{sec:optAlgorithm} we 
discuss the properties of LNML.  In Section~\ref{sec:ML} we instantiate our neighborhood learning method 
on LMNN and MCML. In Section~\ref{sec:experiments} we present the experimental results and we finally conclude with Section~\ref{sec:discussion}.
\section{Related Work}
\label{sec:relatedWork}
The early work of Xing et al., \cite{xing2003dml}, learns a Mahalanobis distance metric for clustering that tries to minimize 
the sum of pairwise distances between similar instances while keeping the sum of dissimilar instance distances greater than a 
threshold.  The similar and dissimilar pairs are determined on the basis of prior knowledge. 
Globerson \& Roweis, \cite{globerson2006mlc} introduced the Maximally Collapsing Metric Learning (MCML).
MCML uses a stochastic nearest neighbor selection rule which selects the nearest neighbor $\mathbf x_j$ of an instance $\mathbf x_i$
 under some probability distribution. It casts the metric learning problem as an optimization problem that tries to 
minimize the distance between two probability distributions, an ideal one and a data dependent one. In the ideal distribution the
selection probability  is always one for instances of the same class and zero for instances of different class, defining in that manner
the similarity and dissimilarity constraints under the global target neighborhood approach. In the data dependent distribution the selection
probability is given by a soft max function of a Mahalanobis distance metric, parametrized by the matrix $\mathbf M$ to be learned.
In a similar spirit Davis et al., ~\cite{davis2007itm}, introduced Information-Theoretic Metric Learning. ITML learns a Mahalanobis
metric for classification with similarities and dissimilarities constraints that follow the global target neighborhood approach. In ITML
all same-class instance pairs should have a distance smaller than some threshold and all different-class instance pairs should 
have a distance larger than some threshold. In addition the objective function of ITML seeks to minimize the distance between 
the learned metric matrix and a prior metric matrix, modelling like that prior knowledge about the metric if such is available. The optimization problem is cast as a distance of 
distributions subject to the pairwise constraints and finally expressed as a Bregman optimization problem (minimizing the LogDet divergence).
In order to be able to find a feasible solution they introduce slack variables in the similarity and dissimilarity constraints.


The so far discussed metric learning methods follow the global target neighborhood approach in which all instances of the same
class should be similar under the learned metric, and all pairs of instances from different classes dissimilar. This is a rather
hard constraint and assumes that there is a linear projection of the original feature space that results in unimodal class conditional
distributions. Goldberger et al., \cite{goldberger2005nca}, proposed the NCA metric learning method which uses the same stochastic 
nearest neighbor selection rule under the same data-dependent probability distribution as MCML. 
NCA seeks to minimize the soft error under its stochastic nearest neighbor selection rule.  It uses only similarity constraints 
and the original target neighborhood of an instance is the set of all same-class instances. After metric learning 
some, but not necessarily all, same class instances will end up having high probability of been selecting as 
nearest neighbors of a given instance, thus having a small distance, while the others will be pushed further away. 
NCA thus learns the local target neighborhood as a part of the optimization. 
Nevertheless it is prone to overfitting, \cite{yang2007regularized}, 
and does not scale to large datasets. The large margin nearest neighbor method (LMNN) described 
in~\cite{weinberger2006dml,weinberger2009distance} learns a distance 
metric which directly minimizes the distances of each instance to its local target neighbors while keeping a large margin between them and 
different class instances. The target neighbors have to be specified prior to metric learning and in the absence of prior knowledge these are
the $k$ same class nearest neighbors for each instance. 

\section{Learning Target Neighborhoods for Metric Learning}
\label{sec:LNML}
Given a set of training instances $\{(\mathbf x_1,y_1),(\mathbf x_2,y_2)$ $,\ldots,(\mathbf x_n,y_n)\}$
where $\mathbf x_i\in \mathbb {R}^d$ and the class labels $ y_i\in \{1,2,\ldots,c\}$, the Mahalanobis distance 
between two instances $\mathbf x_i$ and $\mathbf x_j$ is defined as: 
\begin{equation}
D_{\mathbf M}(\mathbf x_i,\mathbf x_j)=(\mathbf x_i-\mathbf x_j)^T{\mathbf M}(\mathbf x_i-\mathbf x_j) 
\end{equation}
where $\mathbf M$ is a Positive Semi-Definite (PSD) matrix ($\mathbf{M} \succeq 0$) that we will learn.

We can reformulate many of the existing metric learning methods, 
such as~\cite{xing2003dml,schultz2004learning,globerson2006mlc,nguyen2008metric,weinberger2009distance}, 
by explicitly parametrizing the target neighborhood relations as follows:
\begin{eqnarray}
\label{ML}
\min_{\mathbf { {M,\Xi}}}&&\sum_{ij,i \neq j,y_i=y_j}\mathbf P_{ij}\cdot F_{ij}({\mathbf {M,\Xi}}) \\\nonumber
s.t. && \mbox{  constraints of the original problem }\nonumber
\end{eqnarray}
The matrix $\mathbf P, \mathbf P_{ij} \in \{0,1\}$, describes the target neighbor relationships which are established prior to 
metric learning and are not altered in these methods. $\mathbf {P}_{ij}=1$, if 
$\mathbf x_{j}$ is the target neighbor of $\mathbf x_{i}$, otherwise, $\mathbf {P}_{ij}=0$. Note that
the parameters $\mathbf P_{ii}$ and $\mathbf P_{ij}: y_i \neq y_j$ are set to zero, since an instance $\mathbf x_i$ cannot be a target neighbor 
of itself and the target neighbor relationship is constrained to same-class instances.
This is why we have $i \neq j, y_i=y_j$ in the sum, however, for simplicity we will drop it from the following equations. 
$F_{ij}({\mathbf {M,\Xi}})$ is the term of the objective function of the metric learning 
methods that relates to the target neighbor relationship $\mathbf P_{ij}$, $\mathbf M$ is the Mahalanobis 
metric that we want to learn, and $\mathbf \Xi$ is a set of other parameters in the original 
metric learning problems, e.g. slack variables. Regularization terms on the $\mathbf {M}$ and $\mathbf \Xi$ 
parameters can also be added into Problem~\ref{ML}~\cite{schultz2004learning,nguyen2008metric}.

The $F_{ij}({\mathbf {M,\Xi}})$ term can be seen as the 'quality' of the target neighbor relationship 
$\mathbf P_{ij}$ under the distance metric $\mathbf M$; a low value indicates a high quality neighbor 
relationship $\mathbf P_{ij}$. The different metric learning methods learn the $\mathbf M$ matrix
that optimizes the sum of the quality terms based on the a priori established target neighbor 
relationships; however, there is no reason to believe that these target relationships are the most appropriate for learning.

To overcome the constraints imposed by the fixed target neighbors we propose the Learning the Neighborhood for Metric Learning method 
(LNML) in which, in addition to the metric matrix $\mathbf M$, we also learn the target neighborhood matrix $\mathbf P$. LNML has as 
objective function the one given in Problem~\ref{ML} which we now optimize also over the target neighborhood matrix $\mathbf P$.
We add some new constraints in Problem~\ref{ML} which control for the size of the
target neighborhoods. The new optimization problem is the following:
\begin{eqnarray}
\label{LNML}
{\min_{\mathbf{M,\Xi,P}}} &&\sum_{ij}\mathbf P_{ij}\cdot F_{ij}(\mathbf M, \Xi) \\ \nonumber
s.t.   
     && \sum_{i,j}{\mathbf {\mathbf P}}_{ij}=K_{av}*n \\\nonumber
     && {K_{max}} \geq \sum_{j}{\mathbf{\mathbf P}}_{i,j} \geq {K_{min}} \\\nonumber
     && 1 \geq {\mathbf {\mathbf P}}_{ij} \geq 0 \\ \nonumber
     && \mbox{constraints of the original problem}          \nonumber
\end{eqnarray} 
$K_{min}$ and $K_{max}$ are the minimum and maximum numbers of target neighbors that an instance can have.
Thus the second constraint controls the number of target neighbor that $\mathbf x_i$ instance can have.
$K_{av}$ is the average number of target neighbor per instance. It holds by construction that 
${K_{max}} \geq {K_{av}} \geq {K_{min}}$. We should note here that we relax the target neighborhood 
matrix so that its elements $\mathbf P_{ij}$ take values in $[0,1]$ (third constraint). However, we will show later 
that a solution $\mathbf P_{ij} \in \{0,1\}$ is obtained, given some natural constraints on the $K_{min}$, $K_{max}$ 
and $K_{av}$ parameters. 

\subsection{Target neighbor assignment rule}
Unlike other metric learning methods, e.g. LMNN, in which the number of target neighbors is fixed, LNML can
assign a different number of target neighbors for each instance. As we saw the first constraint
in Problem~\ref{LNML} sets the average number of target neighbors per instance to $K_{av}$, while the second constraint
limits the number of target neighbors for each instance between $K_{min}$ and $K_{max}$.
The above optimization problem implements a target neighbor
assignment rule which assigns more target neighbors to instances
that have high quality target neighbor relations.
We do so in order to avoid overfitting since most often the 'good' quality instances defined 
by metric learning algorithms~\cite{globerson2006mlc,weinberger2009distance} are instances in 
dense areas with low classification error.  As a result the geometry of the dense areas of the different classes will be emphasized. 
 How much emphasis we give on good quality instances depends on the actual values of 
${K_{min}}$  and ${K_{max}}$. 
 In the limit one can set the value of ${K_{min}}$ to zero; nevertheless the risk with such a strategy
is to focus heavily on dense and easy to learn regions of the data and ignore important boundary instances that 
are useful for learning.

\section{Optimization}
\label{sec:optAlgorithm}
\subsection{Properties of the Optimization Problem}
We will now show that we get integer solutions for the $\mathbf P$ matrix by solving a linear 
programming problem and analyze the properties of Problem~\ref{LNML}.
\begin{lemma}
\label{integer}
Given $\mathbf{{M, \Xi}}$, and ${K_{max}} \geq {K_{av}} \geq {K_{min}}$ then $\mathbf{{P}}_{ij} \in \{0,1\}$, 
if $K_{min}$, $K_{max}$ and $K_{av}$ are integers.
\end{lemma}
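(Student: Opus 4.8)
The plan is to fix $\mathbf{M}$ and $\mathbf{\Xi}$, so that what remains of Problem~\ref{LNML} is a \emph{linear} program in the entries $\mathbf{P}_{ij}$: the objective $\sum_{ij}\mathbf{P}_{ij}F_{ij}(\mathbf{M},\mathbf{\Xi})$ is linear in $\mathbf{P}$ and every constraint on $\mathbf{P}$ is linear. It then suffices to show that this LP always has an \emph{integral} optimal solution, since the box constraints $0\le\mathbf{P}_{ij}\le 1$ immediately confine such integers to $\{0,1\}$. By the Hoffman--Kruskal integrality theorem it is enough to check that (i) the constraint matrix is totally unimodular and (ii) the right-hand side vector is integral~\cite{sierksma2002linear}. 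Point (ii) is trivial: the right-hand sides are $K_{av}n$, $K_{max}$, $K_{min}$, $1$ and $0$, all integers under the hypothesis of the lemma (recall $n\in\mathbb{N}$).

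For point (i) I would write the constraint matrix explicitly. Discard the variables $\mathbf{P}_{ij}$ that are fixed to $0$ (those with $i=j$ or $y_i\neq y_j$) and index the remaining ``free'' variables by the ordered pairs $(i,j)$ with $y_i=y_j,\ i\neq j$. The structural rows are then: a single \emph{global} row, $\sum_{ij}\mathbf{P}_{ij}=K_{av}n$, with a $+1$ in every column; and, for each instance $i$, one \emph{row-$i$} expression $\sum_{j}\mathbf{P}_{ij}$, appearing in the pair of inequalities $K_{min}\le\sum_j\mathbf{P}_{ij}\le K_{max}$, with a $+1$ exactly in the columns $(i,\cdot)$ and $0$ elsewhere. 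Thus every column contains exactly two nonzero entries, both equal to $+1$: one in its row-$i$ constraint, one in the global constraint. Partition the rows into the $n$ row-$i$ constraints on one side and the singleton consisting of the global constraint on the other; then the two $+1$'s of each column lie on opposite sides of this partition, so by the Heller--Tompkins characterization of totally unimodular $\{0,\pm1\}$-matrices with at most two nonzeros per column, the matrix is totally unimodular. Adjoining the box constraints only appends rows that are $\pm$ unit vectors, which preserves total unimodularity.

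Combining (i) and (ii), the feasible polytope of the LP --- nonempty by the feasibility assumption implicit in Problem~\ref{LNML} and bounded by the box constraints --- is integral, so the LP attains its minimum at an integral vertex $\mathbf{P}^*$; together with $0\le\mathbf{P}^*_{ij}\le 1$ this yields $\mathbf{P}^*_{ij}\in\{0,1\}$, which is the claim. The step I expect to be the actual obstacle is the total-unimodularity argument in (i): one must set up the bookkeeping of variables and constraints so that the ``two $+1$'s per column, split by the partition'' structure is apparent, and be slightly careful that the global equality --- even though it is the sum of the row-$i$ expressions --- does not destroy unimodularity (it does not, precisely because the row partition above continues to work after it is added). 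A minor point worth a remark is that feasibility of the LP needs each class to contain at least $K_{min}+1$ instances and the total budget $K_{av}n$ to be attainable within the caps $K_{max}$, which is consistent with $K_{max}\ge K_{av}\ge K_{min}$ plus a mild assumption on class sizes.
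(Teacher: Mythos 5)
Your proof is correct and follows essentially the same route as the paper: fix $\mathbf{M},\mathbf{\Xi}$ to obtain a linear program, show the constraint matrix is totally unimodular via the two-nonzeros-per-column row-partition criterion (the paper invokes the same sufficient condition, then extends to the stacked matrix $[\mathbf{I},-\mathbf{I},\mathbf{A},-\mathbf{A}]^T$), and conclude integrality from the Hoffman--Kruskal/Schrijver integrality theorem with an integral right-hand side. Your added remark on feasibility (class sizes versus $K_{min}$ and the budget $K_{av}n$) is a point the paper leaves implicit, but it does not change the argument.
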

\begin{proof}
Given $\mathbf{{M}}$ and $\mathbf{{\Xi}}$, $F_{ij}(\mathbf {M,\Xi})$ becomes a constant. We 
denote by 
${\mathbf p}$
the vectorization of the target neighborhood matrix $\mathbf P$ which excludes the diagonal elements and $\mathbf P_{ij}:y_i \neq y_j$, and by $\mathbf f$ 
the respective vectorized version of the $F_{ij}$ terms. Then we rewrite Problem~\ref{LNML} as:
\begin{eqnarray}
\label{integer-opt}
     \min_{{\mathbf p}} & & {\mathbf p}^T{\mathbf f} \nonumber \\ 
s.t.  & &(\underbrace{K_{max},\cdots,K_{max}}_n, K_{av}*n)^T \geq \mathbf A \mathbf p \geq\nonumber\\  &&{(\underbrace{K_{min},\cdots,K_{min}}_n,K_{av}*n)^T}\nonumber\\
      & & 1 \geq \mathbf p_i \geq 0 
\end{eqnarray}
The first and second constraints of Problem~\ref{LNML} are reformulated as the first constraint in 
Problem~\ref{integer-opt}. $\mathbf A$ is a $(n+1) \times (\sum_{c_l}n^2_{c_l}-n)$ constraint matrix, where $n_{c_l}$ is the number of instances in class $c_l$
\[
{\mathbf{A}}=\left[
\begin{array}{cccc}
\mathbf 1            & \mathbf 0&\cdots &\mathbf 0\\
\mathbf 0            & \mathbf 1&\cdots &\mathbf 0\\
\vdots  & \vdots  & \ddots & \vdots  \\
\mathbf 0            & \mathbf 0&\cdots &\mathbf 1\\
\mathbf 1            & \mathbf 1&\cdots &\mathbf 1
\end{array}
\right]
\]
where $\mathbf 1$ ($\mathbf 0$)  is the vector of ones (zeros). Its elements depends on the its position in the matrix $\mathbf A$. In its $i$th column, all $\mathbf 1$ ($\mathbf 0$) vectors have $n_i-1$ elements, where $n_i$ is the number of instances of class $c_j$ with $c_j=y_{p_i}$.
According to the sufficient condition 
for total unimodularity (Theorem 7.3 in~\cite{sierksma2002linear}) the constraint matrix 
$\mathbf A$ is a totally unimodular matrix. Thus, the constraint matrix 
$\mathbf B=[\mathbf{I, -I, A, -A}]^T$ in the following equivalent problem also is a totally unimodular matrix (pp.268 in~\cite{schrijver1998theory}). 
\begin{eqnarray}
\label{integer-B}
\min_{\mathbf p} && \mathbf p^T \mathbf f \nonumber\\ 
s.t.&&\mathbf B {\mathbf p}  \leq {\mathbf e}\nonumber\\                           
&&e=(\underbrace{1,\cdots,1}_{\sum_{c_l}n^2_{c_l}-n},\underbrace{0,\cdots,0}_{\sum_{c_l}n^2_{c_l}-n},\underbrace{K_{max},\cdots,K_{max}}_n,\nonumber\\  
&&K_{av}*n,\underbrace{-K_{min},\cdots,-K_{min}}_n,-K_{av}*n)^T   
\end{eqnarray}
Since $\mathbf e$ is an integer vector, provided $K_{min}$, $K_{max}$, and $K_{av}$, are integers, and the constraint matrix $\mathbf B$ 
is totally unimodular, the above linear programming problem will only have integer solutions (Theorem 19.1a in~\cite{schrijver1998theory}).
Therefore, for the solution $\mathbf p$ it will hold that $\mathbf p_i \in \{0,1\}$ and consequently $\mathbf P_{ij} \in \{0,1\}$.
\end{proof}

Although the constraints to control the size of the target neighborhood are convex, the objective function 
in Problem~\ref{LNML} is not jointly convex in $\mathbf P$ and $(\mathbf M,\mathbf \Xi)$. However, as 
shown in Lemma~\ref{integer}, the binary solution of $\mathbf P$ can be obtained by a simple linear program 
if we fix $(\mathbf{M,\Xi})$. Thus, Problem~\ref{LNML} is  individually convex in $\mathbf P$ and $(\mathbf M,\mathbf \Xi)$, 
if the original metric learning method is convex; this condition holds for all the methods that 
can be coupled with our neighborhood learning method~\cite{xing2003dml,schultz2004learning,globerson2006mlc,nguyen2008metric,weinberger2009distance}.

\subsection{Optimization Algorithm}
Based on Lemma~\ref{integer} and the individual convexity property we propose a general and easy to implement 
iterative algorithm to solve Problem~\ref{LNML}. The details are given in Algorithm~\ref{algo:LNML}.
At the first step of the $k$th iteration we learn the binary target neighborhood matrix $\mathbf P^{(k)}$ under a fixed  
metric matrix $\mathbf M^{(k-1)}$ and $\mathbf \Xi^{(k-1)}$, learned in the $k-1$th iteration, by solving the linear programming problem described 
in Lemma~\ref{integer}. At the second step of the iteration we learn the metric matrix $\mathbf M^{(k)}$ and $\mathbf{{\Xi}}^{(k)}$ with the target 
neighborhood matrix $\mathbf P^{(k)}$ using as the initial metric matrix the $\mathbf M^{(k-1)}$. 
The second step is simply the application of a standard metric learning algorithm in which we set the target neighborhood matrix to 
the learned $\mathbf P^{(k)}$ and the initial metric matrix to $\mathbf M^{(k-1)}$. The convergence of proposed algorithm is guaranteed 
if the original metric learning problem is convex~\cite{bezdek2002some}. In our experiment, it most often converges in 5-10 iterations.
\begin{algorithm}[tb]
   \caption{LNML}
   \label{algo:LNML}
\begin{algorithmic}
   \STATE {\bfseries Input:} $\mathbf{{X}}$, $\mathbf{{Y}}$, $\mathbf{{M}}^0$,$\mathbf{{\Xi}}^0$, $K_{min}$, $K_{max}$ and $K_{av}$ 
   \STATE {\bfseries Output:} $\mathbf{{M}}$ 
   \REPEAT
   \STATE $\mathbf{{P}}^{(k)}$=LearningNeighborhood($\mathbf X,\mathbf Y,\mathbf{{M}}^{(k-1)},\mathbf{{\Xi}}^{(k-1)}$) by solving Problem ~\ref{integer-opt} 
   \STATE $(\mathbf{{M}}^{(k)},\mathbf{\Xi}^{(k)})$=MetricLearning($\mathbf{{M}}^{(k-1)}$,$\mathbf{P}^{(k)}$)
   \STATE $k:=k+1$
   \UNTIL{convergence}
\end{algorithmic}
\end{algorithm}

\section{Instantiating LNML}
\label{sec:ML}
In this section we will show how we instantiate our neighborhood learning method with two standard metric learning methods, LMNN and MCML,
other possible instantiations include the metric learning methods presented in~\cite{xing2003dml,schultz2004learning,nguyen2008metric}.

\subsection{Learning the Neighborhood for LMNN}
 

The optimization problem of LMNN is given by:
\begin{eqnarray}
\label{LMNN-opt-prob}
\min_{\mathbf{ M, \xi}} &&\sum_{ij} \mathbf {P}_{ij}\{(1-\mu) D_{\mathbf {M}}(\mathbf x_i,\mathbf x_j)
+\mu \sum_{l}(1-\mathbf{Y}_{il})\xi_{ijl}\} \\ \nonumber
s.t. & & D_{\mathbf M}(\mathbf x_i,\mathbf x_l)-D_{\mathbf M}(\mathbf x_i,\mathbf x_j) \geq 1-\xi_{ijl}\\ \nonumber
     & & \xi_{ijl}>0\\ \nonumber
     & & \mathbf M \succeq 0
\end{eqnarray}
where the matrix $\mathbf Y, \mathbf Y_{ij} \in \{0,1\},$ indicates whether the class labels 
$y_i$ and $y_j$ are the same ($\mathbf Y_{ij}=1$) or different ($\mathbf Y_{ij}=0$).
The objective is to minimize the sum of the distances of all instances to their target 
neighbors while allowing for some errors, this trade off is controlled by the $\mu$ parameter. 
This is a convex optimization problem that has been shown to have good generalization ability
and can be applied to large datasets. The original problem formulation
corresponds to a fixed parametrization of $\mathbf P$ where its non-zero values are given by the 
$k$ nearest neighbors of the same class.


Coupling the neighborhood learning framework with the LMNN metric learning method results in the following optimization problem:
\begin{eqnarray}
\label{LNLMNN}
\min_{\mathbf{M,P,\xi}}&&\sum_{ij}\mathbf P_{ij}\cdot F_{ij}(\mathbf M, \xi) \\ \nonumber
=\min_{\mathbf M, \mathbf P,\xi} & &  \sum_{ij}{\mathbf {P}}_{ij}\{(1-\mu) D_{\mathbf {M}}(\mathbf x_i,\mathbf x_j)
+\mu \sum_{l}(1-\mathbf{Y}_{il})\xi_{ijl}\}\\ \nonumber
s.t.    & & {K_{max}} \geq \sum_{j}{\mathbf{P}}_{i,j} \geq {K_{min}} \\ \nonumber
       & & \sum_{i,j}{\mathbf {P}}_{ij}=K_{av}*n\\ \nonumber
       & & 1 \geq {\mathbf {P}}_{ij} \geq 0  \\ \nonumber
       & & D_{\mathbf {M}}(\mathbf x_i,\mathbf x_l)-D_{\mathbf {M}}(\mathbf x_i,\mathbf x_j) \geq 1-\xi_{ijl}\\ \nonumber
       & & \xi_{ijl}>0\\ \nonumber
       & &  {\mathbf {M}} \succeq 0 \nonumber
\end{eqnarray}
We will call this coupling of LNML and LMNN LN-LMNN. 
The target neighbor assignment rule of LN-LMNN assigns more target neighbors to instances that have small distances 
from their target neighbors and low hinge loss. 

\subsection{Learning the Neighborhood for MCML}
MCML relies on a data dependent stochastic probability that an instance $\mathbf x_j$ is selected as the 
nearest neighbor of an instance $\mathbf x_i$; this probability is given by:
\begin{eqnarray}
\label{stochastic-rule}
p_\mathbf{M}(j|i)=\frac{e^{-D_\mathbf {M}(\mathbf x_i,\mathbf x_j)}}{ Z_i}=
\frac{e^{-D_\mathbf M(\mathbf x_i,\mathbf x_j)}}{\sum_{k \ne i}{e^{-D_\mathbf M(\mathbf x_i,\mathbf x_k)}}}, & i \ne j\nonumber\\
\end{eqnarray}
MCML learns the Mahalanobis metric that minimizes the KL divergence distance between this 
probability distribution and the ideal probability distribution $p_0$ given by:
\begin{eqnarray}
\label{ideal distribution}
 p_0(j|i)=\frac{\mathbf P_{ij}}{\sum_{k}{\mathbf P_{ik}}}, & p_0(i|i)=0
 \end{eqnarray}
where $\mathbf P_{ij}=1$, if instance $\mathbf x_j$ is the target neighbor of instance $\mathbf x_i$, otherwise, $\mathbf P_{ij}=0$. The 
optimization problem of MCML is given by:
\begin{eqnarray}
\label{MCML}
\min_{\mathbf M}&& \sum_{i}KL[p_0(j|i)|p_\mathbf M(j|i)] \\ \nonumber
=\min_{\mathbf M}&& \sum_{i,j}{\mathbf P_{ij}} \frac{(D_\mathbf M(\mathbf x_i,\mathbf x_j)+\log  Z_i)}{\sum_{k}{\mathbf P_{ik}}}  \\ \nonumber
s.t.&&\mathbf M \succeq 0 \nonumber
\end{eqnarray}
Like LMNN, this is also a convex optimization problem. In the original problem formulation the ideal distribution 
is defined based on class labels, i.e. $\mathbf P_{ij}=1$, if instances $\mathbf x_i$ and $\mathbf x_j$ share the same 
class label, otherwise, $\mathbf P_{ij}=0$. 

The neighborhood learning method cannot learn directly the target neighborhood for MCML, since the objective function of the 
latter cannot be rewritten in the form of the objective function in Problem~\ref{LNML}, due to the 
denominator $\sum_{k}{\mathbf{P}}_{ik}$. However, if we fix the size of the neighborhood to 
$\sum_{k}{\mathbf{P}}_{i,k}=K_{av}=K_{min}=K_{max}$ the two methods can be coupled and 
the resulting optimization is given by: 
\begin{eqnarray}
\label{LNMCML}
\min_{\mathbf{M,P}}&&{\sum_{ij}\mathbf P_{ij}\cdot F_{ij}(\mathbf M)} \\ \nonumber
=\min_{\mathbf {M,P}} &&\sum_{i,j}{\mathbf P_{ij}}\frac{(D_\mathbf M(\mathbf x_i, \mathbf x_j)+\log Z_i)}{K_{av}}\\ \nonumber
s.t. && \sum_{j}{\mathbf{\mathbf P}}_{i,j} = {K_{av}} \\ \nonumber
&& \mathbf M \succeq 0
\end{eqnarray}
We will dub this coupling of LNML and MCML as LN-MCML. 
The original MCML method follows the global approach in establishing the neighborhood, with LN-MCML
we get a local approach in which the neighborhoods are of fixed size $K_{av}$ for every instance. 

\section{Experiments}
\label{sec:experiments}
With the experiments we wish to investigate a number of issues. First, we want to examine whether learning the target 
neighborhood relations in the metric learning process can improve predictive performance over the baseline approach of metric 
learning with an apriori established target neighborhood. 
Second, we want to acquire an initial understanding of how the parameters $K_{min}$ and $K_{max}$ relate to the predictive performance. To this end, we will examine the
predictive performance of LN-LMNN with two fold inner Cross Validation (CV) to select the appropriate values of $K_{min}$ and $K_{max}$, method which we will denote by 
LN-LMNN(CV), and that of LN-LMNN, with a default setting of $K_{min}=K_{max}=K_{av}$.  Finally, we want to see how the method that we propose 
compares to other state of the art metric learning methods, namely NCA and ITML. We include as an additional  
baseline in our experiments the performance of the Euclidean metric (EucMetric).  We experimented with twelve different 
datasets: seven from the UCI machine learning repository, Sonar, Ionosphere, Iris, Balance, Wine, Letter, Isolet; four text mining datasets, Function, Alt, 
Disease and Structure, which were constructed from biological corpora~\cite{kalousis2007stability}; and MNIST~\cite{MNIST}, a handwritten 
digit recognition problem. A more detailed description of the datasets is given in Table~\ref{datasets}. 

Since LMNN is computationally expensive for datasets with large number of features we applied principal component analysis (PCA)
to retain a limited number of principal components,  following~\cite{weinberger2009distance}. The datasets to which this was done were 
the four text mining datasets, Isolet and MNIST. For the two latter 173 and 164 principal components were respectively retained  
that explain 95\% of the total variance. For the text mining datasets more than 1300 principal components should be retained 
to explain 95\% of the total variance. Considering the running time constraints, we kept the 300 most important principal components 
which explained 52.45\%, 47.57\%, 44.30\% and 48.16\% of the total variance for respectively Alt, Disease, Function and Structure.  
We could experiment with NCA and MCML on full tranining datasets only with datasets with a small number of instances due to their computational complexity. For 
completeness we experimented with NCA on large datasets by undersampling the training instances, i.e. the learning process only involved 10\% of full training 
instances which was the maximum number we could experiment for each dataset. We also applied ITML on both versions of the larger datasets, i.e. with PCA-based 
dimensionality reduction and the original ones.

\begin{table*}[bt]
\begin{center}
\caption{Datasets.}
\label{datasets}
\vskip 0.15in
 \scalebox{0.7}{
\begin{tabular}{l|l|cccccc} 
  Datasets   &  Description                     & \# Sample& \# Feature &  \# Class &\# Retained PCA Components& \% Explained Variance             \\ \hline
  Sonar      &                                  & 208     & 60        & 2        & NA                 &  NA                          \\
  Ionosphere &                                  & 351     & 34        & 2        & NA                 &  NA                          \\
  Wine       &                                  & 178     & 13        & 3        & NA                 &  NA                          \\
  Iris       &                                  & 150     & 4         & 3        & NA                 &  NA                          \\
  Balance    &                                  & 625     & 4         & 3        & NA                 &  NA                          \\ \hline
  Letter     & character recognition            & 20000   & 16        & 26       & NA                 &  NA                          \\
  Function   & sentence classification          & 3907    & 2708      & 2        & 300                &  44.30\%                     \\
  Alt        & sentence classification          & 4157    & 2112      & 2        & 300                &  52.45\%                     \\
  Disease    & sentence classification          & 3273    & 2376      & 2        & 300                &  47.57\%                     \\
  Structure  & sentence classification          & 3584    & 2368      & 2        & 300                &  48.16\%                     \\
  Isolet     & spoken character recognition     & 7797    & 619       & 26       & 173                &  95\%                                 \\
  MNIST      & handwritten digit recognition   & 70000   & 784       & 26       & 164                &  95\%                                  \\ 
\end{tabular}
}
\end{center}
\vskip -0.1in
\end{table*}

For ITML, we randomly generate for each dataset the default $20c^2$ constraints which are bounded repectively by the 5th and 95th percentiles of the
distribution of all available pairwise distances for similar and dissimilar pairs. The slack variable $\gamma$ is chosen form $\{10^i\}^4_{i=-4}$ 
using two-fold CV. The default 
identity matrix is employed as the regularization matrix. 
For the different instantiations of the LNML method we took care to have the same parameter settings for the encapsulated metric learning 
method and the respective baseline metric learning. For LN-LMNN, LN-LMNN(CV) and LMNN the regularization parameter $\mu$ that controls the trade-off 
between the distance minimization component and the hinge loss component was set to 0.5 (the default value of LMNN). For LMNN the default 
number of target neighbors was used (three). For LN-LMNN, we set $K_{min}=K_{max}=K_{av}=3$, similar to LMNN. 
To explore the effect of a flexible neighborhood, the values of the $K_{min}$ and $K_{max}$ parameters in LN-LMNN(CV) were 
selected from the sets $\{1,4,3\}$ and $\{2,5,3\}$ respectively, while $K_{av}$ was fixed to three. Similarly for LN-MCML we also set $K_{av}=3$.
The distance metrics for all methods are initialized to the Euclidean metric. As the classification algorithm we used 1-Nearest 
Neighbor.  

We used 10-fold cross validation for all datasets to estimate classification accuracy, with the exception of Isolet and MNIST 
for which the default train and test split was used. The statistical significance of the differences were tested with McNemar's test and the p-value was set to 0.05. 
In order to get a better understanding of the relative performance of the different algorithms for a given dataset we used a ranking schema in which an algorithm  
A was assigned one point if it was found to have a significantly better accuracy than another algorithm B, 0.5 points if the two algorithms did not have a 
significantly different performance, and zero points if A was found to be significantly worse than B. The rank of an algorithm for a given dataset is simply 
the sum of the points over the different pairwise comparisons. When comparing $N$ algorithms in a single dataset the highest possible score is $N-1$ 
while if there is no significant difference each algorithm will get $(N-1)/2$ points.

\subsection{Results}
The results are presented in Table \ref{results}. Examining whether learning also the neighborhood improves the predictive performance compared to plain metric
learning, we see that in the case of LN-MCML, and for the five small datasets for which we have results, learning 
the neighborhood results in a statistically significant deterioration of the accuracy in one out of the five datasets 
(balance), while for the remaining four the differences were not statistically significant. If we now examine LN-LMNN(CV), LN-LMNN  
and LMNN we see that here learning the neighborhood does bring a statistically significant improvement. More precisely, LN-LMNN(CV) and LN-LMNN improve over LMNN respectively in six (two small and four large) and four (two small and two large)
out of the 12 datasets. Moreove, by comparing LN-LMNN(CV) and LN-LMNN, we see that 
learning a flexible neighborhood with LN-LMNN(CV) improves significantly the performance over LN-LMNN on two datasets. The low performance of LN-MCML on the 
balance dataset was intriguing; in order to take a closer look we tried to determine automatically the appropriate target neighborhood size, $K_{av}$,
by selecting it on the basis of five-fold inner cross validation from the set $K_{av}= \{3,5,7,10,20,30\}$. The results 
showed that the default value of $K_{av}$ was too small for the given dataset, with the average selected size of the target 
neighborhood at 29. As a result of the automatic tunning of the target neighborhood size the predictive performance of LN-MCML
jumped at an accuracy of 93.92\% which represented a significant improvement over the baseline MCML for the balance dataset.
For the remaining datasets it turned out that the choice of $K_{av}=3$ was a good default choice.  In any case, determining 
the appropriate size of the target neighborhood and how that affects the predictive performance is an issue that we wish to investigate further. 
In terms of the total score that the different methods obtain the LN-LMNN(CV) achieves the best in both the small and large datasets. 
It is followed closely by NCA in the small datasets and by LN-LMNN in the large datasets.


\begin{table*}[bt]
\begin{center}

\caption{
Accuracy results. The superscripts $^{+-=}$ next to the LN-XXXX accuracy indicate the result of the McNemar's
statistical test result of its comparison to the accuracy of XXXX and denote respectively a significant win, loss or no difference 
for LN-XXXX. Similarly, the superscripts $^{+-=}$ next to the LN-LMNN(CV) accuracy indicate the result of its comparison to the accuracies of LMNN and LN-LMNN. The bold entries for each dataset have no significant difference from the best accuracy for that dataset. The number in 
the parenthesis indicates the score of the respective algorithm for the given dataset based on the pairwise comparisons.
}
\label{results}
\subtable[Small datasets]{
\centering
\label{smalldata}
 \scalebox{0.7}{
\begin{tabular}{l|cc|ccc|c|c|c} 
  Datasets        &MCML              &LN-MCML   & LMNN       & LN-LMNN   & LN-LMNN(CV)  &EucMetric             &NCA                  &ITML                \\ \hline
Sonar & \textbf{ 82.69}(3.5)& \textbf{ 84.62}(3.5)$^=$& \textbf{ 81.25}(3.5)& \textbf{ 81.25}(3.5)$^=$& \textbf{ 83.17}(3.5)$^{==}$& \textbf{ 80.77}(3.5)& \textbf{ 81.73}(3.5)& \textbf{ 82.69}(3.5)\\
Ionosphere &  88.03 (3.0)& \textbf{ 88.89}(3.5)$^=$& \textbf{ 89.17}(3.5)&  87.75 (3.0)$^=$& \textbf{ 92.02}(5.5)$^{=+}$&  86.32 (3.0)& \textbf{ 88.60}(3.5)&  87.75 (3.0)\\
Wine &  91.57 (3.0)& \textbf{ 96.07}(4.0)$^=$&  94.38 (3.0)& \textbf{ 97.75}(5.5)$^+$& \textbf{ 97.75}(5.5)$^{+=}$&  76.97 (0.0)&  91.57 (3.0)& \textbf{ 94.94}(4.0)\\
Iris & \textbf{ 98.00}(4.5)& \textbf{ 96.00}(3.5)$^=$& \textbf{ 96.00}(3.5)&  94.00 (3.0)$^=$&  94.00 (3.0)$^{==}$& \textbf{ 96.00}(3.5)& \textbf{ 96.00}(3.5)& \textbf{ 96.00}(3.5)\\
Balance &  91.20 (5.0)&  78.08 (1.0)$^-$&  78.56 (1.0)&  89.12 (4.5)$^+$&  89.28 (4.5)$^{+=}$&  78.72 (1.0)& \textbf{ 96.32}(7.0)&  87.84 (4.0)\\\hline
Total Score& 19.0 & 15.5 & 14.5 & 19.5 & 22.0 & 11.0 & 20.5 & 18.0 \\
\end{tabular}}
}
\subtable[Large datasets]{
\centering
\label{largedata}
 \scalebox{0.6}{
\begin{tabular}{l|ccc|cc|c|cc} 
  Datasets  & PCA+LMNN     & {PCA+LN-LMNN} & {PCA+LN-LMNN(CV)}   &EucMetric      &PCA+EucMetric  &PCA+NCA &ITML   &PCA+ITML                   \\ \hline
  
  Letter &  96.86 (5.0)& \textbf{ 97.71}(6.5)$^+$& \textbf{ 97.64}(6.5)$^{+=}$&  96.02 (0.5)&  96.02 (0.5)&  96.48 (3.0)&  96.39 (3.0)&  96.39 (3.0)\\
  Function &  76.30 (2.5)&  76.73 (2.5)$^=$& \textbf{ 78.91}(6.0)$^{++}$& \textbf{ 78.73}(6.0)&  76.48 (2.5)&  72.36 (0.0)& \textbf{ 78.73}(6.0)&  76.45 (2.5)\\
Alt &  83.98 (5.0)& \textbf{ 84.92}(6.5)$^+$& \textbf{ 85.37}(6.5)$^{+=}$&  68.51 (0.5)&  71.33 (2.0)&  78.54 (4.0)&  68.49 (0.5)&  72.53 (3.0)\\
Disease & \textbf{ 80.23}(4.0)& \textbf{ 80.14}(4.0)$^=$& \textbf{ 80.66}(4.0)$^{==}$& \textbf{ 80.60}(4.0)& \textbf{ 80.23}(4.0)&  73.59 (0.0)& \textbf{ 80.60}(4.0)& \textbf{ 80.14}(4.0)\\
Structure &  77.87 (4.5)& \textbf{ 78.83}(6.0)$^=$& \textbf{ 79.37}(6.5)$^{+=}$&  75.82 (1.5)&  77.00 (4.0)&  71.93 (0.0)&  75.79 (1.5)&  77.06 (4.0)\\ 
Isolet & \textbf{ 95.96}(6.0)& \textbf{ 95.06}(6.0)$^=$& \textbf{ 95.06}(6.0)$^{==}$&  88.58 (1.5)&  88.33 (1.5)&   85.63(0.0)&  92.05 (3.5)&  91.08 (3.5)\\
   MNIST & \textbf{ 97.66}(6.0)& \textbf{ 97.66}(6.0)$^=$& \textbf{ 97.73}(6.0)$^{==}$&  96.91 (2.0)&  96.97 (2.0)&  96.58 (1.5)&  96.93 (1.5)&  97.09 (3.0)\\\hline
  Total Score      &33&37.5&41.5&16 &16.5&8.5&20&23 \\ 
\end{tabular}}
}
\end{center}
\vskip -0.1in
\end{table*}

\section{Conclusion and Future Work}
\label{sec:discussion}
We presented LNML, a general Learning Neighborhood method for Metric Learning algorithms which couples the metric learning 
process with the process of establishing the appropriate target neighborhood for each instance, i.e. discovering for each 
instance which same class instances should be its neighbors.  With the exception of NCA, which cannot be applied on 
datasets with many instances, all other metric learning methods whether they establish a global or a local target neighborhood 
do that prior to the metric learning and keep the target neighborhood fixed throughout the learning process. The metric that is 
learned as a result of the fixed neighborhoods simply reflects these original relations which are not necessarily optimal 
with respect to the classification problem that one is trying to solve. LNML lifts these constraints by learning the target 
neighborhood. We demonstrated it with two metric learning methods, LMNN and MCML. The experimental results show that learning 
the neighborhood can indeed improve the predictive performance. 

The target neighborhood matrix $\mathbf P$ is strongly related to the similarity graphs which are often used  in semi-supervised 
learning~\cite{JebaraICML2009}, spectral clustering \cite{Luxburg2007} and manifold learning~\cite{RoweisSaulLLE2000}. Most often 
the similarity graphs in these methods are constructed in the original space, which nevertheless can be quite different from true 
manifold on which the data lies. These methods could also profit if one is able to learn the similarity graph instead of basing 
it on some prior structure. 

\section*{Acknowledgments} 
This work was funded by the Swiss NSF (Grant 200021-122283/1). The support of the European
Commission through EU projects DebugIT (FP7-217139) and e-LICO (FP7-231519) is also gratefully
acknowledged.

\bibliography{LNML}

\begin{thebibliography}{10}

\bibitem{bezdek2002some}
J.~Bezdek and R.~Hathaway.
\newblock {Some notes on alternating optimization}.
\newblock {\em Advances in Soft Computing—AFSS 2002}, pages 187--195, 2002.

\bibitem{davis2007itm}
J.V. Davis, B.~Kulis, P.~Jain, S.~Sra, and I.S. Dhillon.
\newblock {Information-theoretic metric learning}.
\newblock In {\em Proceedings of the 24th international conference on Machine
  learning}. ACM New York, NY, USA, 2007.

\bibitem{globerson2006mlc}
A.~Globerson and S.~Roweis.
\newblock {Metric learning by collapsing classes}.
\newblock In {\em Advances in Neural Information Processing Systems},
  volume~18. MIT Press, 2006.

\bibitem{goldberger2005nca}
J.~Goldberger, S.~Roweis, G.~Hinton, and R.~Salakhutdinov.
\newblock {Neighbourhood components analysis}.
\newblock In {\em Advances in Neural Information Processing Systems},
  volume~17. MIT Press, 2005.

\bibitem{guillaumin2009you}
M.~Guillaumin, J.~Verbeek, and C.~Schmid.
\newblock {Is that you? Metric learning approaches for face identification}.
\newblock In {\em Proceedings of 12th International Conference on Computer
  Vision}, pages 498--505, 2009.

\bibitem{JebaraICML2009}
Tom Jebara, Jun Wang, and Shih-Fu Chang.
\newblock Graph construction and b-matching for semi-supervised learning.
\newblock In {\em Proceedings of the 26th Annual International Conference on
  Machine Learning}, pages 441--448. ACM New York, NY, USA, 2009.

\bibitem{kalousis2007stability}
A.~Kalousis, J.~Prados, and M.~Hilario.
\newblock {Stability of feature selection algorithms: a study on
  high-dimensional spaces}.
\newblock {\em Knowledge and information systems}, 12(1):95--116, 2007.

\bibitem{MNIST}
Y.~LeCun, L.~Bottou, Y.~Bengio, and P.~Haffner.
\newblock Gradient-based learning applied to document recognition.
\newblock In {\em Proceedings of the IEEE}, volume~86, pages 2278--2324, 1998.

\bibitem{lu2009geometry}
Z.~Lu, P.~Jain, and I.S. Dhillon.
\newblock {Geometry-aware metric learning}.
\newblock In {\em Proceedings of the 26th Annual International Conference on
  Machine Learning}. ACM New York, NY, USA, 2009.

\bibitem{nguyen2008metric}
N.~Nguyen and Y.~Guo.
\newblock {Metric Learning: A Support Vector Approach}.
\newblock In {\em Proceedings of the European conference on Machine Learning
  and Knowledge Discovery in Databases-Part II}, page 136. Springer-Verlag,
  2008.

\bibitem{RoweisSaulLLE2000}
Sam Roweis and Lawrence Saul.
\newblock Nonlinear dimensionality reduction by locally linear embedding.
\newblock {\em Science}, 22:2323--2326, 2000.

\bibitem{schrijver1998theory}
A.~Schrijver.
\newblock {\em {Theory of linear and integer programming}}.
\newblock John Wiley \& Sons Inc, 1998.

\bibitem{schultz2004learning}
M.~Schultz and T.~Joachims.
\newblock {Learning a distance metric from relative comparisons}.
\newblock In {\em Advances in Neural Information Processing Systems 16:
  Proceedings of the 2003 Conference}, page~41. The MIT Press, 2004.

\bibitem{sierksma2002linear}
G.~Sierksma.
\newblock {\em {Linear and integer programming: theory and practice}}.
\newblock CRC, 2002.

\bibitem{Luxburg2007}
Ulrike von Luxburg.
\newblock A tutorial on spectral clustering.
\newblock {\em Statistics and Computing}, 17:395--416, 2007.

\bibitem{wang2011}
J.~Wang, H.~Do, A.~Woznica, and A.~Kalousis.
\newblock Metric learning with multiple kernels.
\newblock In {\em Advances in Neural Information Processing Systems}. MIT
  Press, 2011.

\bibitem{weinberger2006dml}
K.~Weinberger, J.~Blitzer, and L.~Saul.
\newblock {Distance metric learning for large margin nearest neighbor
  classification}.
\newblock In {\em Advances in neural information processing systems},
  volume~18. MIT Press, 2006.

\bibitem{weinberger2009distance}
K.Q. Weinberger and L.K. Saul.
\newblock {Distance metric learning for large margin nearest neighbor
  classification}.
\newblock {\em The Journal of Machine Learning Research}, 10:207--244, 2009.

\bibitem{xing2003dml}
E.P. Xing, A.Y. Ng, M.I. Jordan, and S.~Russell.
\newblock {Distance metric learning with application to clustering with
  side-information}.
\newblock In {\em Advances in neural information processing systems}. MIT
  Press, 2003.

\bibitem{yang2007regularized}
Z.~Yang and J.~Laaksonen.
\newblock {Regularized neighborhood component analysis}.
\newblock {\em Lecture Notes in Computer Science}, 4522:253--262, 2007.

\end{thebibliography}
\bibliographystyle{plain}
\end{document}